\newcommand{\lmr}[1]{{#1}}
\newtheorem{theorem}{Theorem}[section]
\newtheorem{lemma}[theorem]{\TE{Lemma}}
\algnewcommand{\LineComment}[1]{\State \(\triangleright\) #1}
\newcommand*{\colorboxed}{}
\def\colorboxed#1#{%
  \colorboxedAux{#1}%
}
\newcommand*{\colorboxedAux}[3]{%
  % #1: optional argument for color model
  % #2: color specification
  % #3: formula
  \begingroup
    \colorlet{cb@saved}{.}%
    \color#1{#2}%
    \boxed{%
      \color{cb@saved}%
      #3%
    }%
  \endgroup
}
\def\Eqref Eq:#1:{\eqref{eq:#1}}
\newcommand{\E}[1]{\mathbf{#1}}
\newcommand{\TE}[1]{\textbf{#1}}
\newcommand{\THREEC}[3]{\left(\setlength{\arraycolsep}{1pt}\begin{array}{c}#1 \\ #2 \\ #3\end{array}\right)}
\newcommand{\FIVE}[5]{\left(\setlength{\arraycolsep}{1pt}\begin{array}{ccccc}{#1}, & {#2}, & {#3}, & {#4}, & {#5}\end{array}\right)}
\newcommand{\argmin}[1]{\underset{#1}{\E{argmin}}}
\newcommand{\argminP}[1]{\E{argmin}}
\newcommand{\argmaxP}[1]{\E{argmax}}
\definecolor{darkgreen}{HTML}{186a3b}
\newcommand{\BBNode}{\text{BBNode}}
\newcommand{\SOSONE}{\text{SOS}_1}
\newcommand{\SOSTWO}{\text{SOS}_2}
\newcommand{\SOTHREE}{\text{SO}_3}
\newcommand{\PBALL}{B}
\newcommand{\MIP}{\text{MICP}}
\definecolor{COMMENT_COLOR}{RGB}{52, 156, 219}
\title{\LARGE \bf
New Formulation of Mixed-Integer Conic 
Programming for Globally Optimal Grasp Planning
}
\author{Min Liu$^{1,3}$, Zherong Pan$^{2}$, Kai Xu$^{1*}$, and Dinesh Manocha$^{3}$ \\
{\href{https://gamma.umd.edu/researchdirections/grasping/global_grasp_planner}{https://gamma.umd.edu/researchdirections/grasping/global\_grasp\_planner}}%% <-this % stops a space
%\thanks{*This work was not supported by any organization}% <-this % stops a space
\thanks{$^{1}$Min Liu  and Kai Xu are with School of Computer, National University of Defense Technology, Changsha, HN 410073, China 
        {\tt\small gfsliumin@gmail.com;} 
        {\tt\small kevin.kai.xu@gmail.com}}%
        
\thanks{$^{2}$Zherong Pan is with Department of Computer Science, University of North Carolina at Chapel Hill, Chapel Hill, NC 27514, USA
        {\tt\small zherong@cs.unc.edu}}%
        
\thanks{$^{3}$Min Liu and Dinesh Manocha are with Department of Computer Science and Electrical \& Computer Engineering, University of Maryland at College Park, College Park, MD, 20740 USA 
        {\tt\small dm@cs.umd.edu}}%
        
\thanks{$^{*}$Kai Xu is the corresponding author.}%
}
\begin{document}

\maketitle
\thispagestyle{empty}
\pagestyle{empty}

%%%%%%%%%%%%%%%%%%%%%%%%%%%%%%%%%%%%%%%%%%%%%%%%%%%%%%%%%%%%%%%%%%%%%%%%%%%%%%%%
\begin{abstract}
We present a two-level branch-and-bound (BB) algorithm to compute the optimal gripper pose that maximizes a grasp metric \lmr{in a restricted search space}. Our method can take the gripper's kinematics feasibility into consideration to ensure that a given gripper can reach the set of grasp points without collisions or predict infeasibility with finite-time termination when no pose exists for a given set of grasp points. Our main technical contribution is a novel mixed-integer conic programming (MICP) formulation for the inverse kinematics of the gripper that uses a small number of binary variables and tightened constraints, which can be efficiently solved via a low-level BB algorithm. Our experiments show that optimal gripper poses for various target objects can be computed taking 20-180 minutes of computation on a desktop machine and the computed grasp quality, in terms of the $\E{Q}_1$ metric, is better than those generated using sampling-based planners.
\end{abstract}

%%%%%%%%%%%%%%%%%%%%%%%%%%%%%%%%%%%%%%%%%%%%%%%%%%%%%%%%%%%%%%%%%%%%%%%%%%%%%%%%

\section{INTRODUCTION}
\begin{figure*}[th]
\centering
\scalebox{0.85}{
\includegraphics[width=0.99\textwidth]{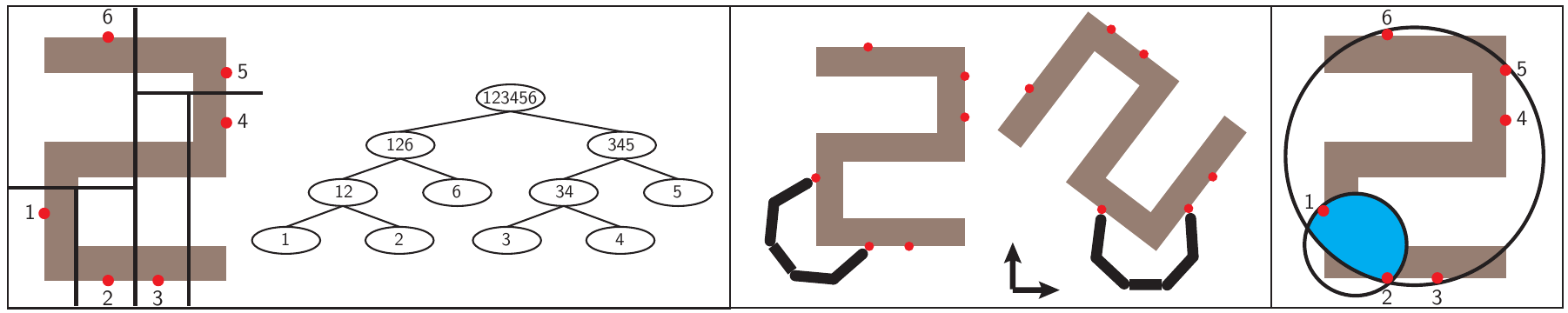}
\put(-478,60){(a)}
\put(-310,70){(b)}
\put(-180,30){(c)}
\put(-263,17){\rotatebox{-51}{\small{palm}}}
\put(-255,88){Formulation in \cite{dai2018synthesis}}
\put(-125,88){Ours}
\put(-55 ,70){(d)}
\put(-70 ,20){(e)}
\put(-31 ,17){\rotatebox{57}{\tiny$B(123456)$}}
\put(-60 ,42){\rotatebox{-45}{\tiny$B(12)$}}}
\vspace{-10px}
\caption{\label{fig:pipeline} \small{(a): A toy example where the target object is a Z-shape (olive) on which we have $P=6$ potential grasp points (red point). (b): We first build a KD-tree for the potential grasp points. Our gripper has $L=5$ links and $K=2$ fingertip points. (c): For any $\BBNode$, e.g., $\BBNode(1,2)$, we will use the IK solver for a feasibility check. We allow MICP to use few binary variables by allowing the target object to move and fixing the palm of the gripper. (d): We build a bounding sphere, i.e. $\PBALL(123456)$ and $\PBALL(12)$, for each non-leaf KD-tree node. (e): For an internal $\BBNode(12,\bullet)$ ($\bullet$ means any KD-tree node), $x_1$ can either be at $p_1$ or $p_2$. This constraint has a convex relaxation that requires $x_1$ to be inside the intersection of bounding spheres (blue): $\PBALL(123456)\cap\PBALL(126)\cap\PBALL(12)$.}}
\vspace{-10px}
\end{figure*}
Grasp planning is a well-studied problem in robotics and there is a large amount of work in grasp metric computation \cite{roa2015grasp} and gripper pose planning \cite{ciocarlie2007dexterous}. Since the two components are somewhat independent, practitioners can build versatile planning frameworks that allow an arbitrary combination of grasp metrics and gripper pose planners for different applications \cite{1371616}. A high number of choices have been proposed for grasp metrics \cite{roa2015grasp}, and a few gripper pose planners are also known. Some planners such as \cite{dai2018synthesis,1220716} return sub-optimal solutions, which are sensitive to initial guesses and can return grasps of low qualities. Another planner based on simulated annealing (SA) was proposed in \cite{ciocarlie2007dexterous}, which can compute the optimal solution if an infinite number of samples is allowed.

A promising direction of previous works \cite{7812687,schulman2017grasping} use branch-and-bound (BB) to compute optimal grasp points that maximize a given grasp metric. Unlike SA, BB returns the optimal solution or predicts infeasibility. However, BB algorithms in \cite{7812687,schulman2017grasping} only consider the optimality in grasp points, the kinematics feasibility of gripper is either omitted in \cite{schulman2017grasping} or considered without optimality guarantee in \cite{7812687}.  SCORES\cite{zhu2018scores} uses a recursive neural network for shape composition, and the representation of shape composition can be used for 3D object 
grasp planning; Triangle lasso~\cite{zhao2018triangle} can be used for simultaneous clustering and optimization, and is potential for  grasp optimization. To take the gripper's kinematics into consideration, an inverse kinematics (IK) algorithm is needed to determine whether a given set of grasp points can be reached by the gripper. However, most available inverse kinematics algorithms, such as \cite{murray2017mathematical,doi:10.1177/0278364910396389}, are not optimal and can miss feasible solutions when one exists. Recently, a complete IK algorithm is presented in \cite{dai2017global}, which reformulates IK as a mixed-integer conic programming (MICP). However, \cite{dai2017global} involves the use of a large number of integer parameters making it slow to solve \lmr{because the worst-case complexity of MICP is exponential in the number of integer variables.}

\TE{Main Results:} We present a novel, two-level BB algorithm to compute the optimal gripper pose that maximizes a given grasp metric \lmr{in a restricted search space. Our high-level BB algorithm searches for points on the object's surface that can potentially be used as contact points. Our low-level BB algorithm searches for collision-free gripper poses that realize the given set of contact points. A set of lazy-evaluation heuristic techniques are used to remove unnecessary searches and reduce the branch factor.} %, where the two levels correspond to the two requirements of optimization. Our contributions include:
%\begin{itemize}
    %\item A low-level BB that uses a novel, compact MICP formulation, leading to much fewer binary decision variables than previous work \cite{dai2017global}. Within 1min of computation on a desktop machine, our low-level BB can check whether there exists a gripper's pose, while the gripper's end effectors can reach the given set of grasp points.
    %\item A high-level BB that searches for a set of grasp points to maximize the grasp metric, ignoring the gripper's pose. Our high-level BB uses novel mechanics, \lmr{such as bottom-up kinematics check, warm-started MICP solve and local optimization,} to prune unnecessary computations for low-level BB and further reduce the computational cost.
    %\item A lazy-MICP formulation that introduces collision constraints progressively, which efficiently resolve collisions between the gripper and the target object.
%\end{itemize}
We have tested our algorithm on 10 target objects grasped by a 3-finger gripper with 15 DOFs and a \lmr{Barrett Hand} with \lmr{10} DOFs. Our experiments show that optimal grasps can be computed within 20-180 minutes on a desktop machine for different grippers. Furthermore, our low-level BB formulation results in a speedup of $100\times$ over \cite{dai2017global} in terms of gripper's kinematics feasibility check. We have also compared our algorithm's performance with a sample-based grasp planner \cite{ciocarlie2007dexterous} and observed the following benefits:
\begin{itemize}
    \item Our method always computes higher quality grasps based on $Q_1$ metric, though we are $6-10\times$ slower.
    \item Our method can detect infeasibility within finite time, which happens frequently when target objects are large compared with the gripper.
\end{itemize}

%The rest of the paper is organized as follows. We review prior work in \prettyref{sec:related} and formulate the gripper pose planning problem in \prettyref{sec:problem}. Our novel algorithm based on mixed-integer conic programming is described in \prettyref{sec:formulation}. Our main formulation is presented in \prettyref{sec:formulation} and we present techniques to accelerate the algorithm in \prettyref{sec:acceleration}. We highlight its performance in \prettyref{sec:results} and demonstrate its benefits over prior methods.
\section{RELATED WORK\label{sec:related}}
We review previous works on grasp metric computation, gripper pose planning, and IK algorithms.

\TE{\lmr{Grasp Planning}} takes the gripper's kinematics feasibility into consideration, which computes a \lmr{gripper} pose given the set of contact points as end-effector constraints. Some sampling-based planners \cite{ciocarlie2007dexterous,1371616} determine the gripper's pose first by sampling in the gripper's configuration space. \lmr{Varava \cite{Varava_caging} presented an algorithm that can check whether a geometric body can cage another one or detect infeasibility.} However, it is rather difficult for the fingers to exactly lie on the surface of target objects, so these planners have to close the gripper to have the fingers on the object surface. Other planners, such as \cite{7812687} and our method, first select contact points, compute the grasp quality, and then solve the IK problem to compute the gripper's pose.

\TE{Grasp Metrics} measure the quality of a grasp and provide ways to compare different grasps. A dozen different grasp metrics have been proposed and summarized in \cite{roa2015grasp}. Sampling-based planners can be used to optimize all kinds of grasp metrics. However, BB can only be used when a grasp metric is monotonic \cite{7812687,schulman2017grasping}, i.e. the grasp metric value computed for a superset of grasp points must be larger than \lmr{or equal to} that computed for a subset. Fortunately, most metrics, including $\E{Q}_{VEW}$ \cite{769}, $\E{Q}_{1,\infty}$ \cite{219918}, are monotonic. \lmr{We use the $\E{Q}_1$ metric in our algorithm. The $\E{Q}_1$ metric assumes that the sum of the magnitude of forces is no greater than 1. Every contact will generate a wrench and the $\E{Q}_1$ metric value is equal to the residual radius of the convex hull of all these generated wrenches. Unlike \cite{dai2018synthesis}, our frictional cone is quadratic, i.e. no further linearizations are used. Obviously, a positive $\E{Q}_1$ metric implies force-closure.} In particular, \cite{dai2018synthesis} showed that the computation of $\E{Q}_1$ can be approximated by solving a semidefinite programming (SDP), allowing the BB to be solved using off-the-shelf mixed-integer SDP solvers \cite{doi:10.1080/10556788.2017.1322081}. Instead, we propose to use a more generic form of BB-based algorithm for our high-level problem in order to account for many different metric types.

\TE{IK Algorithms} can be used to check kinematics feasibility. However, these algorithms are sub-optimal or sampling-based. A sub-optimal IK algorithm such as \cite{article,murray2017mathematical,doi:10.1177/0278364910396389} can return false negatives, i.e. reporting infeasibility when a solution exists. On the other hand, a sampling-based IK algorithm such as \cite{doi:10.1177/0278364910396389} can always find the solution but assume an infinite amount of samples are used. Recently, a new IK algorithm based on MICP relaxation is proposed in \cite{dai2018synthesis}, which finds a solution or detects infeasibility in a finite amount of time. However, we found that the formulation of \cite{dai2018synthesis} requires too many binary variables, making MICP solve time-consuming 
for its combinatorial worst-case complexity.
\section{PROBLEM STATEMENT\label{sec:problem}}
In this section, we formulate the problem of grasp planning. All the symbols used in this paper are summarized in \prettyref{table:symbols}. As shown in \prettyref{fig:pipeline}, the planner input includes:
\begin{itemize}
    \item A target object that occupies a volume $\Omega_o\subset R^3$.
    \item A set of $P$ potential grasp points: $p_{1,\cdots,P}\in\partial\Omega_o$.
    \item A gripper represented as an articulated object, i.e. a set of $L$ rigid links. Each link occupies a volume $\Omega_i(\theta)\subset R^3$, where $i=1,\cdots,L$ and $\theta$ is the set of joint angle and globally transformation parameters. On the gripper, there is a set of $K$ fingertip points: $x_{1,\cdots,K}(\theta)$ and $K<L$. \lmr{Without loss of generality}, we always assume the first $K$ links are fingertip links so that $x_i\in\Omega_i$.
    \item A grasp metric $Q(X)$ whose input is a set $X$ of grasp points satisfying $\forall x\in X, x\in\partial\Omega_o$. Moreover, we assume that the grasp metric is monotonic, i.e. $A\subseteq B\implies Q(A)\leq Q(B)$.
\end{itemize}
In this paper, we assume that the first $6$ parameters in $\theta$ are extrinsic parameters ($3$ for rotation and $3$ for translation) and the rest are intrinsic parameters, i.e. joint angles. Given these inputs, the planner either predicts that the problem is infeasible or outputs $\theta^*$ satisfying the following conditions:
\begin{itemize}
    \item \TE{C1:} The gripper does not collide with the target object or have self-collisions. In other words, $\forall i=1,\cdots,L, \Omega_o\cap\Omega_i(\theta^*)=\emptyset$ and $\forall 1\leq i<j\leq L, \Omega_i(\theta^*)\cap\Omega_j(\theta^*)=\emptyset$.
    \item \TE{C2:} Each fingertip point lies on the object surface, i.e. $x_{i,\cdots,K}(\theta^*)\in\{p_{1,\cdots,P}\}$.
    \item \TE{C3:} For all other $\theta$ satisfying \TE{C1} and \TE{C2}, we have: $Q(\{x_{i,\cdots,K}(\theta)\})\leq Q(\{x_{i,\cdots,K}(\theta^*)\})$.
\end{itemize}
\begin{table}
\centering
\caption{\label{table:symbols} \small{Symbol Table}}
\vspace{-10px}
\scalebox{0.7}{
\rowcolors{0}{gray!50}{white}
\setlength{\tabcolsep}{1pt}
\begin{tabular}{llll}
\toprule
Variable & Definition &  Variable & Definition \\
\midrule
$\Omega_o$ & target object & $\Omega_i$ & $i$th link of gripper \\
\lmr{$\partial\Omega_o$} & \lmr{surface of target object} & $p_i$ & $i$th potential grasp point \\
$x_i$ & $i$th fingertip point & $n(p_i)$ & normal on $p_i$  \\
$n(x_i)$ & normal on $x_i$  & $\lambda/\beta/\gamma$ & auxiliary variables \\
$X$ & A set of potential grasp points &$X^i$ & The KD-tree node for $x_i$ \\
$X_p/X_R$ & parent of KD-tree node $X$ / root of tree & $X_l/X_r$ & left / right child of KD-tree node $X$ \\
$\theta$ & gripper's kinematics parameter  & $d_i$ & grid index in piecewise approximation   \\
$\theta_i$ & kinematics parameter influencing $x_i$  & $l_i^j/u_i^j$ & $j$th lower / upper bound in $\theta_i$   \\
$\Theta$ & conceptual solution space & $B/C$ & minimal bounding sphere / cone   \\
$c/r$ & center / radius of $B$  & $m/\epsilon$ & center / radius of $C$ \\
$\epsilon'$ & $\epsilon$ considering user threshold &$L$ & number of gripper links \\
$K$ & number of fingertip points & $P$ & number of potential grasp points \\
$S$ & number of separating directions &$Q$ & monotonic grasp metric \\
$R_i/t_i$ & global rotation / translation of $\Omega_i$  &$R/t$ & global rotation / translation of $\Omega_o$  \\
$N$ & \#cells in piecewise approximation & $s_k$ & $k$th separating direction  \\
$w$ & rotation vector of $R$  & $D$ & penetration depth \\
\hline
\end{tabular}}
\vspace{-10px}
\end{table}
Note that previous works \cite{7812687,schulman2017grasping} ignore \TE{C1} and \TE{C2} and solve the problem using a one-level BB algorithm. On the other hand, the sampling-based planner \cite{ciocarlie2007dexterous} solves the full version of this problem by generating samples of $\theta$ and then testing \TE{C1}, \TE{C2}, and \TE{C3}, but SA cannot detect infeasibility. Instead, our two-level BB algorithm takes \TE{C1}, \TE{C2}, and \TE{C3} into consideration with finite time termination. Conceptually, we identify a large enough subset $\Theta$ of the entire solution space. If we restrict ourselves by adding a condition, \TE{C4:} $\theta\in\Theta$, then the optimal solution to the planning problem can be efficiently solved within a finite amount of computational time. \lmr{We solve for global optimal solutions in a restricted search space because we only sample finite potential grasp points for \TE{C1} and we use a subset of the solution space for a gripper's kinematics parameters for \TE{C4}.}

\section{TWO-LEVEL BRANCH-AND-BOUND FORMULATION\label{sec:formulation}}
\subsection{\lmr{BB Algorithm}}
\lmr{BB algorithms can efficiently find globally optimal solutions for non-convex optimization problems \cite{Morrison:2016:BA:2899521.2899589} in the form of disjoint convex sets, which means that an optimization problem can be decomposed into several sub-cases where each case is convex. A BB algorithm can efficiently prune sub-optimal cases at an early stage and accelerate the computation. To this end, a search tree is constructed, each node of which corresponds to a relaxed convex problem. Starting from the root node, each node is evaluated to either find a solution or to prove infeasibility or sub-optimality. If a node is infeasible or its solution is sub-optimal, all its child nodes must also be infeasible or sub-optimal and they will be excluded from further traversal. Otherwise, the node is branched into two or more child nodes. The key to the success of a BB algorithm is the design of the relaxed convex problem. In our high-level BB, the relaxation is provided by the monotonicity of the grasp metric. In our low-level BB, the relaxation is provided by turning all the integer variables into continuous variables.}
\subsection{High-Level BB}
Our high-level BB takes a very similar form as \cite{7812687}. We select $K$ points, $x_{1,\cdots,K}$, from the set of $P$ potential grasp points, $\{p_{1,\cdots,P}\}$, such that the grasp quality $Q(\{x_{1,\cdots,K}\})$ is maximized. To solve this problem, we first build a KD-tree for $\{p_{1,\cdots,P}\}$. As illustrated in \prettyref{fig:pipeline}b, each KD-tree node is uniquely denoted by a subset $X\subset\{p_{1,\cdots,P}\}$. The KD-tree is used both by our high-level and low-level BB. A balanced KD-tree can effectively reduce the length of search path in high-level BB. It can also restrict the search space and accelerate MICP solve in low-level BB.

The BB algorithm builds a search tree and keeping track of the best solution with the largest grasp quality metric found so far, which is defined as $Q_{best}$. Each node on the search tree can be uniquely denoted by $\BBNode(X^1,\cdots,X^K)$, where each $X^i$ is the KD-tree node for the $i$th fingertip point. This $X^i$ is also the set of potential grasp points that $x_i$ can possible be at. In other words, each $\BBNode$ represents a Cartesian product of the $K$ set of potential graph points. At each $\BBNode$, we encounter one of the two cases:
\begin{itemize}
    \item If $|X^i|=1$ for all $i$, then the $\BBNode$ is a leaf node and we compute tentative grasp quality for this node: $Q(X^1\cup X^2\cdots\cup X^K)$. If the tentative grasp quality is larger than $Q_{best}$, then this $\BBNode$ is known as an incumbent and $Q_{best}$ is updated.
    \item  If there is an $i$ such that $|X^i|>1$, then the $\BBNode$ is a non-leaf node. In this case, we also compute the tentative grasp quality, $Q_{upper}=Q(X^1\cup X^2\cdots\cup X^K)$, for this node. If the tentative grasp quality is smaller than $Q_{best}$, i.e. $Q_{upper}<Q_{best}$, then this $\BBNode$ is eliminated for further processing. Otherwise, we branch on all the $X^i$ with $|X^i|>1$.
\end{itemize}
It has been shown in \cite{7812687,schulman2017grasping} that this algorithm will find the optimal $\{x_{1,\cdots,K}\}$ if $Q$ is monotonic. When $Q$ is monotonic, the tentative grasp quality $Q_{upper}$ is an lifting of the grasp quality metric to a superset, which is also an upper bound of the actual grasp quality. Therefore, rejecting $\BBNode$ when $Q_{upper}<Q_{best}$ will not miss better solutions. However, our high-level BB does not take the gripper's kinematics feasibility into consideration. Each $\BBNode$ essentially specifies all the possible positions of each fingertip point. If it is impossible for the gripper to reach these positions, then the given $\BBNode$ does not contain feasible solutions and should be cut early to avoid the redundant search.

\subsection{Gripper's Inverse Kinematics\label{sec:IK}}
Before we discuss feasibility checks of $\BBNode$, we first propose a novel, MICP-based optimal IK algorithm, which is the cornerstone of our feasibility check algorithm. Compared with \cite{dai2018synthesis}, which can be applied to solve IK for any articulated robot, our formulation only works for the problem of gripper pose planning but uses much fewer binary variables, leading to significant speedup. 

As illustrated in \prettyref{fig:pipeline}c, our main idea is that applying a global transformation of the gripper is equivalent to applying a global inverse transformation of the target object while keeping the palm of gripper fixed. However, if we keep the palm of gripper fixed, then the fingers of the gripper become decoupled. Specifically, we assume that each fingertip $x_i(\theta)=x_i(\theta_i)$ such that: $\theta=\FIVE{0,0,0,0,0,0}{\theta_1}{\theta_2}{\cdots}{\theta_K}$. This assumption holds if we allow the target object to have a global rigid transformation.

Based on this assumption, we can relax the IK problem as MICP. Specifically, we introduce auxiliary variables $R_i,t_i$ for the rotation and translation of the $i$th link. The main constraint to relax is $R_i\in\SOTHREE$ where $R_i$ is also a function of $\theta_i$. We relax $R_i(\theta_i)$ using a piecewise linear approximation by introducing the following constraints:
\begin{small}
\begin{equation}
\begin{aligned}
\label{eq:XChoose}
&R_i=\sum_{d_1,\cdots,d_{|\theta_i|}=0}^N 
\lambda_i^{d_1,\cdots,d_{|\theta_i|}} 
R_i(\theta_i^{d_1,\cdots,d_{|\theta_i|}}) \\[-3px]
&\sum_{d_1,\cdots,d_{|\theta_i|}=0}^N 
\lambda_i^{d_1,\cdots,d_{|\theta_i|}}=1    \\[-3px]
&\sum_{d_1,\cdots,d_{j-1},d_{j+1},\cdots,d_{|\theta_i|}=0}^N 
\lambda_i^{d_1,\cdots,d_{|\theta_i|}}\in\SOSTWO \;\forall j=1,\cdots,|\theta_i|,
\end{aligned}
\end{equation}
\end{small}
where $\SOSTWO$ is the special ordered set of type 2 \cite{vielma2011modeling} and $\lambda_i^{d_1,\cdots,d_{|\theta_i|}}$ are continuous-valued auxiliary variables. This piecewise linear approximation restricts the solution space, which corresponds to our last condition \TE{C4} in \prettyref{sec:problem}. The mixed-integer constraints in \prettyref{eq:XChoose} require $|\theta_i|$ $\SOSTWO$ constraints and hence $|\theta_i|\lceil\E{log}_2N\rceil$ binary decision variables. Finally, $\theta_i^{d_1,\cdots,d_{|\theta_i|}}$ is defined as:
\begin{align}
\theta_i^{d_1,\cdots,d_{|\theta_i|}}=\THREEC
{l_i^1(1-\frac{d_1}{N})+u_i^1\frac{d_1}{N}}
{\vdots}
{l_i^{|\theta_i|}(1-\frac{d_{|\theta_i|}}{N})+u_i^1\frac{d_{|\theta_i|}}{N}},
\end{align}
where $l_i^j$ and $u_i^j$ are joint limits. In other words, we build a $|\theta_i|$-dimensional grid with $N$ cells along each dimension. Next, we discretize $R_i(\theta_i)$ on the grid and use mixed-integer constraints to ensure that $R_i$ falls inside one of the $N^{|\theta_i|}$ cells. Note that all the $R_i(\theta_i^{d_1,\cdots,d_{|\theta_i|}})$ are precomputed using forward kinematics and used as coefficients of the linear constraints (\prettyref{eq:XChoose}).

Since the palm of the gripper is fixed, we have to inversely transform the target object. As a result, each potential grasp point $p_i$ can be transformed into $Rp_i+t$ where $R\in\SOTHREE$. The technique to relax $\SOTHREE$ as MICP has been presented in \cite{dai2018synthesis} but this technique requires too many binary decision variables. Instead, we use a similar technique to \prettyref{eq:XChoose}. Based on the Rodrigues' formula $R=\E{exp}(w)$, where $w$ is an arbitrary 3D vector, we introduce MICP constraints:
\begin{small}
\begin{equation}
\begin{aligned}
\label{eq:RChoose}
&R=\sum_{d_1,d_2,d_3=1}^N \beta^{d_1,d_2,d_3} 
\E{exp}(\THREEC
{-\pi(1-\frac{d_1}{N})+\pi\frac{d_1}{N}}
{-\pi(1-\frac{d_2}{N})+\pi\frac{d_2}{N}}
{-\pi(1-\frac{d_3}{N})+\pi\frac{d_3}{N}})   \\
&\sum_{d_1,d_2,d_3=1}^N \beta^{d_1,d_2,d_3}=1  \\
&\sum_{d_1,d_2}\beta^{d_1,d_2,d_3},
\sum_{d_1,d_3}\beta^{d_1,d_2,d_3},
\sum_{d_2,d_3}\beta^{d_1,d_2,d_3}\in\SOSTWO,
\end{aligned}
\end{equation}
\end{small}
which requires $3\lceil\E{log}_2N\rceil$ binary decision variables and $\beta^{d_1,d_2,d_3}$ are continuous-valued auxiliary variables. Given these constraints, the requirement that the $i$th fingertip point is at $p_j$ can be formulated as a linear constraint:
\begin{align}
\label{eq:EECons}
R_ix_i+t_i=Rp_j+t.
\end{align}
In summary, we reduce the IK problem for the gripper to a set of linear constraints, whose feasibility can be efficiently verified using off-the-shelf solvers such as \cite{gurobi}. Putting the two parts together, our formulation needs $(|\theta|-3)\lceil\E{log}_2N\rceil$ binary decision variables to solve the IK problem.
\subsection{Low-Level BB\label{sec:low}}
The goal of solving low-level BB is to check whether a $\BBNode(X^1,\cdots,X^K)$ contains a feasible solution in terms of gripper's kinematics. In \prettyref{sec:IK}, the IK problem is formulated as a MICP. However, solving IK is not enough for feasibility checks of $\BBNode{}$s because \prettyref{eq:EECons} constrains that each $x_i$ can only be at one point, while a $\BBNode$ generally allows $x_i$ to be at one of several points in non-leaf cases. In the latter case, we have at least one $|X^i|>1$ so that $x_i$ can be at any point in the set $\{Rp_j+t|p_j\in X^i\}$. In order for the feasibility check to be performed using the off-the-shelf MICP solver \cite{gurobi}, we have to relax this point-in-set constraint as a linear or conic constraint. A typical relaxation is to constrain that $x_i$ lies in the convex hull of the set. However, this constraint takes the following form which is not convex:
\begin{align}
\label{eq:CCons}
R_ix_i+t_i=\sum_jw_j(Rp_j+t)\quad w_j\geq0\quad \sum w_j=1.
\end{align}
This is because $w_j$ and $R$ are both variables, leading to a bilinear form. It is possible to relax a bilinear form into MICP by requiring additional binary decision variables. Instead, we propose to construct a minimal bounding sphere for the set $X^i$ denoted as:
\begin{align*}
X^i\subseteq B(X^i)\triangleq\{x|\|x-c(X^i)\|^2\leq r(X^i)\}, 
\end{align*}
where $c(X^i)$ is the center of the sphere and $r(X^i)$ is the squared radius. Next, we relax the point-in-set constraint as:
\begin{align}
\label{eq:BCons}
\|R_ix_i+t_i-Rc(X^i)-t\|^2\leq r(X^i),
\end{align}
which is a quadratic cone and can be handled by \cite{gurobi}. Note that $c(X^i)$ and $r(X^i)$ are constants and can be precomputed for each node of the KD-tree (see Appendix A for details). A minor issue is that \prettyref{eq:BCons} is not as tight as \prettyref{eq:CCons} in terms of the volume of the constrained space. To alleviate this problem, we notice that if $X^i$ has a parent in KD-tree denoted as $X_p$, then $x_i$ should also satisfy the point-in-set constraint of $X_p$. Therefore, we can backtrace $X^i$ to the root KD-tree node and add all the constraints of \prettyref{eq:BCons} along the path, as illustrated in \prettyref{fig:pipeline}e.
\begin{figure}[ht]
\vspace{-10px}
\centering
\scalebox{0.65}{
\includegraphics[width=0.45\textwidth]{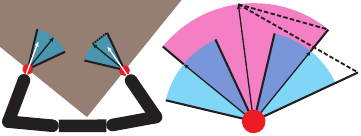}
\put(-180,30){(a)}
\put(-30 ,5 ){(b)}
\put(-175,65){\small$\sqrt{\epsilon}$}
\put(-63 ,83){\rotatebox{-14}{\small$\sqrt{\epsilon(X^i)}$}}
\put(-26 ,59){\rotatebox{-30}{\small$\sqrt{\epsilon'(X^i)}$}}}
\vspace{-10px}
\caption{\label{fig:normal} \small{(a): We illustrate the normal of fingertip points $n(x_i)$ (white arrow) and the inward normal of potential grasp points $n(p_j)$ (black arrow). We allow $n(x_i)$ to lie in a normal cone around $n(p_j)$ (blue) with a threshold $\epsilon$ (dashed line). (b): We illustrate the relaxed normal cone of the two potential grasp points (red) with threshold denoted as $\epsilon(X^i)$. The final threshold used in the constraint is $\epsilon'(X^i)$, taking the user-defined threshold into consideration. All vectors have unit length. We use an extruded red region for clarity.}}
\vspace{-10px}
\end{figure}
\subsection{Normal Constraints}
We can further optimize our formulation by taking the surface normals of the target object into consideration, leading to even tighter constraints. As illustrated in \prettyref{fig:normal}a, each potential grasp point $p_j$ can be associated with an inward surface normal denoted by $n(p_j)$. Also, each fingertip point $x_i$ can also be associated with a normal $n(x_i)$. It is intuitive to constrain that $n(x_i)$ should be pointing at a similar direction to $n(p_j)$. In practice, we do not need $n(x_i)$ to align with $n(p_j)$ exactly but allow $n(x_i)$ to lie in a small vicinity. Therefore, if a leaf $\BBNode(X^1,\cdots,X^K)$ is encountered, then we add the following constraint to MICP for each $X^i=\{p\}$:
\begin{align}
\label{eq:NCons}
\|R_in(x_i)-Rn(p)\|^2\leq\epsilon,
\end{align}
where $\epsilon$ is a user-defined threshold. If a non-leaf $\BBNode$ is encountered, then we have to add a normal-in-set constraint. Using a similar technique as \prettyref{sec:low}, we construct a normal cone denoted as:
\begin{align*}
\{n(p)|p\in X^i\}\subseteq C(X^i)\triangleq\{n|\|n-m(X^i)\|^2\leq \epsilon(X^i)\},
\end{align*}
for each internal KD-tree node during precomputation. Here $m(X^i)$ is the central direction of the normal cone and $\epsilon(X^i)$ is the squared radius. We can then add the relaxed normal-in-set constraint for $X^i$:
\begin{align}
\label{eq:NConsR}
\|R_in(x_i)-Rm(X^i)\|^2\leq\epsilon'(X^i),
\end{align}
where $\epsilon'(X^i)$ is the squared radius of the normal cone taking the user-defined threshold into consideration, as illustrated in \prettyref{fig:normal}b (see Appendix A). Finally, we can further tighten the normal-in-set constraint using a similar technique as \prettyref{sec:low}. We can backtrace $X^i$ to the root KD-tree node and add all constraints of \prettyref{eq:NConsR} along the path.

\subsection{Collision Handling using Lazy-MICP}
In addition to checking the gripper's kinematics feasibility, our low-level BB also ensures that gripper's links do not collide with each other or with the target object. It has been shown in \cite{7076321,dai2018synthesis} that collision constraints can be relaxed as MICP. In order to reduce the use of binary decision variables, we propose to add collision constraints in a lazy manner. 

Specifically, we assume that the target object $\Omega_o$ and all gripper links $\Omega_i$ are convex objects. If $\Omega_o$ is not convex then we can approximate it using a union of convex shapes. We first ignore all collision constraints and solve MICP. We then detect collisions between $R\Omega_o+t$ and $\Omega_i(\theta)$ and record the pair of points with the deepest penetration denoted as $D$, e.g. using \cite{1260767}. If we find that $a\in\Omega_o$ and $b\in\Omega_i$ are in collision, then we pick a separating direction from a set of possible separating directions $\{s_1,\cdots,s_S\}$ and introduce the following constraint as illustrated in \prettyref{fig:collision}a and \prettyref{fig:collision}b:
\begin{small}
\begin{equation}
\begin{aligned}
\label{eq:collA}
&s_k^T(Ra+t)+D\leq s_k^T(R_ib+t_i)+(1-\gamma_k^{oi})M\;\forall k    \\
&\gamma_k^{oi}\geq0
\quad\sum_{k=1}^S\gamma_k^{oi}=1
\quad\gamma_1^{oi},\cdots,\gamma_k^{oi}\in\SOSONE,
\end{aligned}
\end{equation}
\end{small}
where $\SOSONE$ is the special ordered set of type 1 \cite{vielma2011modeling}, $\gamma_k^{oi}$ are the auxiliary variables, and $M$ is the big-M parameter \cite{samuel1972iterative}. Similarly, if there is a collision between a pair of points, $a\in\Omega_i$ and $b\in\Omega_j$, then we have the following constraint:
\begin{small}
\begin{equation}
\begin{aligned}
\label{eq:collB}
&s_k^T(R_ja+t_j)+D\leq s_k^T(R_ib+t_i)+(1-\gamma_k^{ji})M\;\forall k    \\
&\gamma_k^{ji}\geq0
\quad\sum_{k=1}^S\gamma_k^{ji}=1
\quad\gamma_1^{ji},\cdots,\gamma_k^{ji}\in\SOSONE.
\end{aligned}
\end{equation}
\end{small}
After adding collision constraints, the new MICP is solved again with a warm-start and we again perform collision-detection. This is looped until no new collisions are detected or MICP becomes infeasible. Note that if a new collision is detected for a link-link or link-object pair for which collision has been detected in previous loops, then only the first lines of \prettyref{eq:collA} and \prettyref{eq:collB} are needed. In other words, binary variables are needed once for each link-link and link-object pair and the binary variables number is $\lceil\E{log}_2S\rceil$. 

Note that the collisions between the first K fingertip links and the target object do not need to be detected or handled by MICP. This is because each fingertip link contacts the target object at one point with matched normal when \prettyref{eq:NCons} holds with $\epsilon=0$, which is a sufficient condition for two convex objects to be collision-free \cite{rockafellar-1970a}. In practice, we allow users to set a small, positive $\epsilon$ to account for inaccuracies in gripper and target object shapes.
\begin{figure}[ht]
%\vspace{-5px}
\centering
\scalebox{0.70}{
\includegraphics[width=0.45\textwidth]{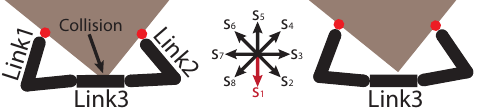}
\put(-235,35){(a)}
\put(-130,45){(b)}
\put(-10 ,35){(c)}}
\vspace{-10px}
\caption{\label{fig:collision} \small{A \lmr{2D} illustration of collision handling algorithm. (a): There is collision between Link3 and the target object. (b): MICP selects one of the 8 possible directions. (c): Collision can be resolved when $s_1$ is selected (red). MICP does not need to consider the collisions between Link1, Link2 and the target object because they contact at one point with matched normal. \lmr{We choose $S$ = 64 in 3D experiments.}}}
\vspace{-10px}
\end{figure}
\section{ALGORITHM ACCELERATION\label{sec:acceleration}}
Our method discussed in \prettyref{sec:formulation} is computationally costly due to the repeated use of the MICP-based IK algorithm, to check the kinematics feasibility of the gripper. In this section, we discuss three techniques to reduce the cost of MICP solving. Our first technique is \TE{bottom-up kinematics check}, which is based on the following observation:
\begin{lemma}
\label{Lem:Feas}
If the MICP-based IK problem for a \BBNode{} is feasible, then the MICP-based IK problem for its parent is also feasible.
\end{lemma}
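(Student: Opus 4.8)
The plan is to prove the stronger statement that the feasible set of the relaxed MICP attached to a $\BBNode$ can only enlarge as we move from a node to its parent, which immediately yields the lemma. First I would fix notation, writing $\mathcal{F}(X^1,\cdots,X^K)$ for the feasible set of the MICP in the shared variables $\theta, R, t, \{R_i,t_i\}$ together with the auxiliaries $\lambda,\beta,\gamma$, and recalling that branching replaces each $X^i$ with $|X^i|>1$ by one of its KD-tree children. Thus for every coordinate $i$ the parent's set $X^i_p$ is an ancestor of, or equal to, the child's set $X^i$ in the KD-tree. The lemma then reduces to the inclusion $\mathcal{F}(\text{child})\subseteq\mathcal{F}(\text{parent})$.

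Next I would split the constraints according to whether they depend on the choice of $X^i$. The piecewise-linear relaxations \prettyref{eq:XChoose} and \prettyref{eq:RChoose} of $R_i\in\SOTHREE$ and $R\in\SOTHREE$, together with the collision constraints \prettyref{eq:collA}--\prettyref{eq:collB}, depend only on the gripper links, the object geometry, and the fixed separating directions; they are therefore identical for the child and the parent. The only $X^i$-dependent constraints are the bounding-sphere constraint \prettyref{eq:BCons} and the normal-cone constraint \prettyref{eq:NConsR}, each imposed with backtracing along the root path of $X^i$.

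The crux is the backtracing structure. Since $X^i_p$ lies on the root path of $X^i$ for every coordinate, the spheres and normal cones encountered when backtracing the parent's nodes to the root form a subset of those encountered for the child's nodes. Hence every constraint appearing in the parent's MICP is present, verbatim, in the child's MICP, so any point of $\mathcal{F}(\text{child})$ automatically satisfies all of the parent's constraints. I would unify the leaf and non-leaf cases by reading the equalities \prettyref{eq:EECons} and \prettyref{eq:NCons} as the degenerate, zero-radius instances of \prettyref{eq:BCons} and \prettyref{eq:NConsR} centered at the single point $p_j$; this lets the subset-of-constraints argument cover a child that is a leaf in some coordinate whose parent is not, without ever having to convert an equality into an inequality by hand.

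The step I expect to need the most care is the collision part, because these constraints are added lazily and the witness points $a,b$ returned by collision detection can differ between the two solves. I would argue at the level of feasibility regions rather than of the particular constraints emitted: the collision-free requirement is a property of $(R,t,\{R_i,t_i\})$ alone and is independent of the $X^i$, so after projecting out the lazily introduced $\gamma$ variables it carves out the same region in both problems; the lazy procedure only governs how that region is discovered, not the region itself. With this observation the inclusion $\mathcal{F}(\text{child})\subseteq\mathcal{F}(\text{parent})$ holds exactly, and the lemma follows.
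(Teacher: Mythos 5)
Your proof takes essentially the same route as the paper's: the paper's entire argument is that a $\BBNode$'s MICP is obtained from its parent's by adding constraints (of the forms \prettyref{eq:EECons}, \ref{eq:NCons}, \ref{eq:BCons}, \ref{eq:NConsR}), which is exactly your containment $\mathcal{F}(\text{child})\subseteq\mathcal{F}(\text{parent})$. You are in fact more careful than the paper's one-line proof, which silently passes over both the leaf transition (equality versus backtraced ball constraints) and the lazily generated collision constraints that you explicitly address.
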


\begin{proof}
The IK problem for a \BBNode{} is derived by adding more constraints (in forms of \prettyref{eq:EECons},\ref{eq:NCons},\ref{eq:BCons},\ref{eq:NConsR}) to the IK problem of its parent. (See \prettyref{Alg:low} for more details on the construction of a MICP-based IK problem.)
\end{proof}

Therefore, we can check the gripper's kinematics feasibility lazily. Specifically, if a \BBNode{} is a non-leaf node and it has not been checked for gripper's kinematics feasibility, we skip the check and continue branching. If a \BBNode{} is a leaf node, we solve MICPs to check for gripper's kinematics feasibility for all the \BBNode{}s on the path between this leaf node and the root \BBNode{}. If any of the MICP appears to be feasible in this process, all ancestor nodes will also be feasible and their checks can be skipped. Our second technique is \TE{warm-started MICP solve}. We store the solution of MICP for each \BBNode{} and use this solution as the initial guess for the MICP solves of its children. Our third technique is \TE{local optimization}. Note that MICPs are convex relaxations of non-convex optimization problems. Non-convex optimization problems are sub-optimal but efficient to solve. Therefore, we propose to solve a non-convex optimization before invoking a call to the MICP solver. If the non-convex optimization appears to be feasible, we skip the MICP solves. We use interior point algorithm \cite{byrd2006k} as our non-convex optimization solver. The key steps of our algorithm are illustrated in \prettyref{fig:steps} and we summarized our method in Appendix B.
\begin{figure}[ht]
\vspace{-20px}
\centering
\scalebox{0.7}{
\includegraphics[width=0.45\textwidth]{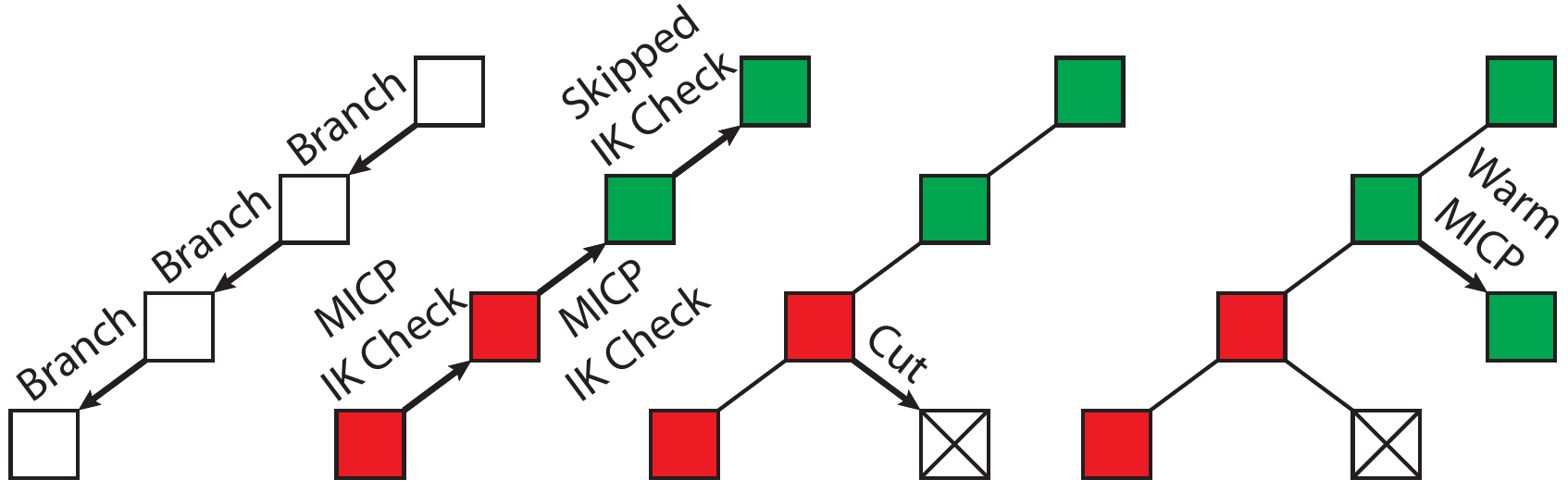}
\put(-212,3){(a)}
\put(-165,3){(b)}
\put(-120,3){(c)}
\put(-57 ,3){(d)}}
\vspace{-10px}
\caption{\label{fig:steps} \small{We illustrate key steps of our algorithm. (a): We skip MICP-based IK checks for non-leaf \BBNode{}s. (b): We solve MICP for \BBNode{}s in a bottom-up manner. If a \BBNode{} is infeasible (red), then the feasibility of its parent \BBNode{} must be checked by solving another MICP. If a \BBNode{} is feasible (green), then its parent must be feasible and we can skip the check. (c): Another leaf \BBNode{} is cut due to the infeasibility  of its parent. (d): The MICP solve on a \BBNode{} can be warm-started from a parent \BBNode{}.}}
\vspace{-15px}
\end{figure}

\section{EXPERIMENTS \& RESULTS\label{sec:results}}
We perform all the experiments on a single desktop machine with one Intel I7-8750H CPU (6-cores at $2.2$Hz). Given a target object, we first sample $p_1,\cdots,p_P$ on $\partial\Omega_o$ using parallel Poisson disk sampling \cite{wei2008parallel} and then build a KD-tree for the set of $P$ points using \cite{popov2006experiences}. Finally, we solve low-level MICP problems using \cite{gurobi}. To grasp the object, we use a 3-finger axial-symmetric gripper with \lmr{$|\theta|=6+3\times3=15$} and $|\theta_i|=3$. Each finger of the gripper is controlled by one ball joint and one hinge joint. Under this setting, our IK formulation requires $12\lceil\E{log}_2N\rceil$ binary decision variables while \cite{dai2018synthesis} requires $630\lceil\E{log}_2N\rceil$ binary decision variables. The average solving time using our formulation and \cite{dai2018synthesis} are compared in \prettyref{table:IKCompare}, which indicates that our formulation is over $100\times$ more efficient.
\setlength{\tabcolsep}{8pt}
\begin{table}[ht]
	\vspace{-7px}
	\begin{center}
		\caption{\label{table:IKCompare} \small{Our MICP-based IK formulation is over $100\times$ faster than \cite{dai2018synthesis} because we use very few binary decision variables. From left to right: number of pieces in discretization, \#binary decision variables using our formulation, \#binary decision variables using \cite{dai2018synthesis}, our average solve time, and the average solve time of \cite{dai2018synthesis} (50 random trials).}}
		\vspace{-5px}
		\scalebox{0.8}{
			\begin{tabular}{ccccc}
				\toprule
				$N$ & \#Binary Ours & \#Binary \cite{dai2018synthesis} & Ours(s) & \cite{dai2018synthesis}(s) \\
				\midrule
				2 & 12 & 630  & 0.034s & 23.021s \\
				4 & 24 & 1260 & 1.231s & 287.741s \\
				8 & 36 & 1890 & 48.366s & 8632.237s \\
				\bottomrule
		\end{tabular}}
	\end{center}
	\vspace{-19px}
\end{table} 

\begin{figure*}[th]
	%\vspace{-10px}
	\centering
	\includegraphics[width=0.8\textwidth]{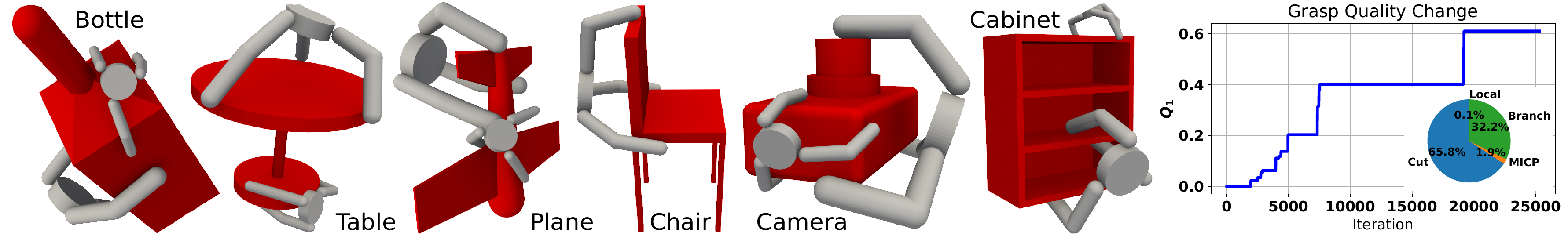}
	\vspace{-5px}
	\caption{\label{fig:mainResults} \small{We show a list of optimal grasps found for the 3-finger axial-symmetric gripper and a row of 6 objects. For some objects we show two different grasps, one for a large gripper and the other for a small gripper. Finally, we plot the convergence history and computational cost of different substeps of our algorithm for the bottle model.}}
	%\vspace{-10px}
\end{figure*}
A list of results is demonstrated in \prettyref{fig:mainResults} and we show the convergence history for one instance. In these examples, we choose $P=100, S=8, N=8, Q=Q_1, \epsilon=0.05$. Under this setting, our algorithm needs to explore $30-60$K \BBNode{}s in order to find the optimal grasp and the computation takes 20-180 minutes depending on the complexity of target object shapes. We also plot the computational cost of different substeps of our algorithm, where 65$\%$ of the \BBNode{}s are cut due to incumbent or gripper's kinematics infeasibility, MICP solves are only needed by 1.9$\%$ of the \BBNode{}s, and local optimization can be used to avoid MICP solves need by 0.1$\%$ of the \BBNode{}s. Finally, if we ignore the low-level BB and only run the high-level BB, our algorithm coincides with \cite{7812687}, which only searches for a set of grasp points. The computation corresponding to high-level BB takes less than $20$ minutes. Therefore, the main bottleneck of our algorithm is the gripper's kinematics check or the low-level BB.

\begin{figure*}[th]
	\vspace{-12px}
	\centering
	\includegraphics[width=0.8\textwidth]{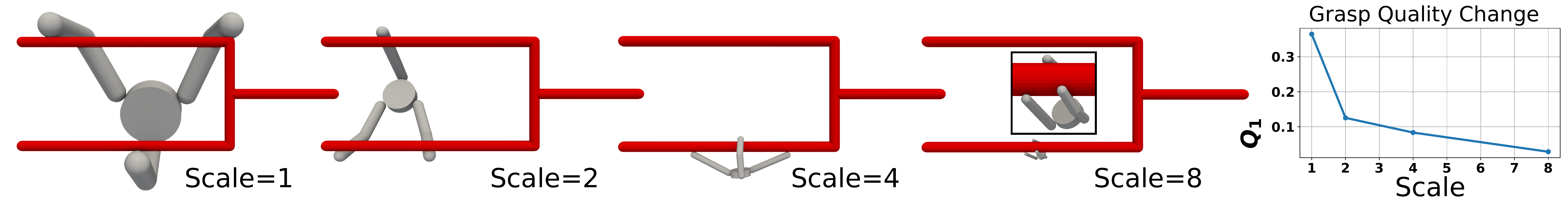}
	\vspace{-5px}
	\caption{\label{fig:scale} \small{We plan grasps with the target object being a tuning fork and rescale the object by $1,2,4,8$ times. As the scale of the object grows, the optimal grasp quality reduces and the gripper can only grasp a smaller part of the target object, leading to lower grasp quality.}}
	\vspace{-10px}
\end{figure*}
In \prettyref{fig:mainResults}, we also show two grasps for some objects using a large and small gripper. The large gripper can hold the entire object. But if the gripper is small, it can only hold a part of the target object. A more systematic evaluation is shown in \prettyref{fig:scale}, where the quality $Q$ monotonically decreases as we use the larger version of the same objects. In \prettyref{table:Time}, we show MICP solving time, total running time and percentage of MICP solving time in total running time.
\setlength{\tabcolsep}{8pt}
\begin{table}[ht]
	\vspace{-20px}
	\begin{center}
		\caption{\label{table:Time} \small{Running time of our results shown in \prettyref{fig:mainResults} with the large gripper.}}
		\vspace{-4px}
		\scalebox{0.8}{
			\begin{tabular}{ccccc}
				\toprule
				\ Object & MICP (min) & Total (min) & Percentage ($\%$) \\
				\midrule
				Bottle & 43.766  & 71.337 & 61.351 \\
				Table & 9.614 & 49.659 & 19.360 \\
				Plane & 5.924 & 34.381 & 17.230 \\
				Chair & 6.004 & 19.669 & 30.525 \\
				Camera & 4.595 & 50.000 & 9.190 \\
				Cabinet & 54.182  & 73.394& 73.823 \\
				\bottomrule
		\end{tabular}}
	\end{center}
	%\vspace{-8px}
	\vspace{-23px}
\end{table}

Finally, we compare the performance of our method with a sampling-based method \prettyref{fig:advantage} using both the 3-finger gripper and the \lmr{10-DOF} Barrett Hand. Being incomplete, a sampling-based method sometimes cannot find solutions, especially when the target object is large compared with the gripper. This is because feasible grasps become rare in the configuration space when object size grows and most samples are not valid. \lmr{For \textit{GraspIt!}\cite{1371616} settings, the space search type is Axis-angle, the energy formulation is AUTO\_GRASP\_QUALITY\_ENERGY, the maximum iteration number is 45000, and the planner type is Sim.Ann.} As a result, the initial guess of the gripper's pose is important when using \cite{1371616}. However, our method can always find a solution when one exists and we do not require users to provide an initial guess. Even when a sampling-based method can find a solution, our solution always has a higher quality in terms of the value of $Q_1$ metric. On the other hand, \cite{1371616} can find a sub-optimal solution within 10min which is $10\times$ faster than our method. \lmr{However, we show in \prettyref{fig:multi_output}a that giving \cite{1371616} more computational time does not improve the solution and we speculate that the solution has fallen into a local minimum.} If only sub-optimal solutions are needed, the user can choose to terminate our algorithm when $Q$ is larger than a threshold. According to the convergence history in \prettyref{fig:mainResults}, our method can usually find feasible solutions after exploring $1-5$K nodes, which also takes several minutes. \lmr{However, if our algorithm is allowed to explore more nodes, as shown in \prettyref{fig:multi_output}b, it can output multiple grasps for an object by storing all the feasible solutions. This makes our algorithm potentially useful for offline grasp dataset construction and online learning-based grasp systems such as \cite{8968115,lu2020planning}.}

\begin{figure}[th]
	\vspace{-5px}
	\centering
	\scalebox{0.8}{
		\includegraphics[width=0.22\textwidth]{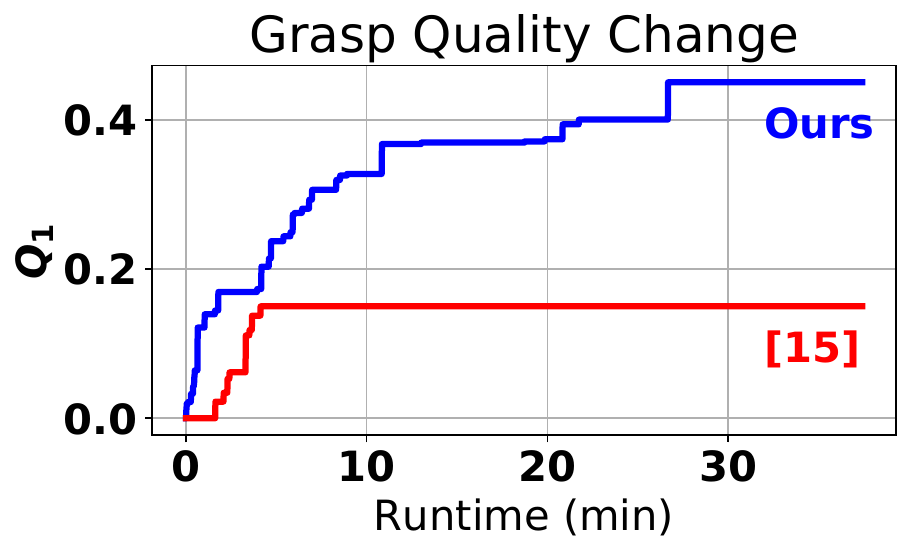}
		\includegraphics[width=0.22\textwidth]{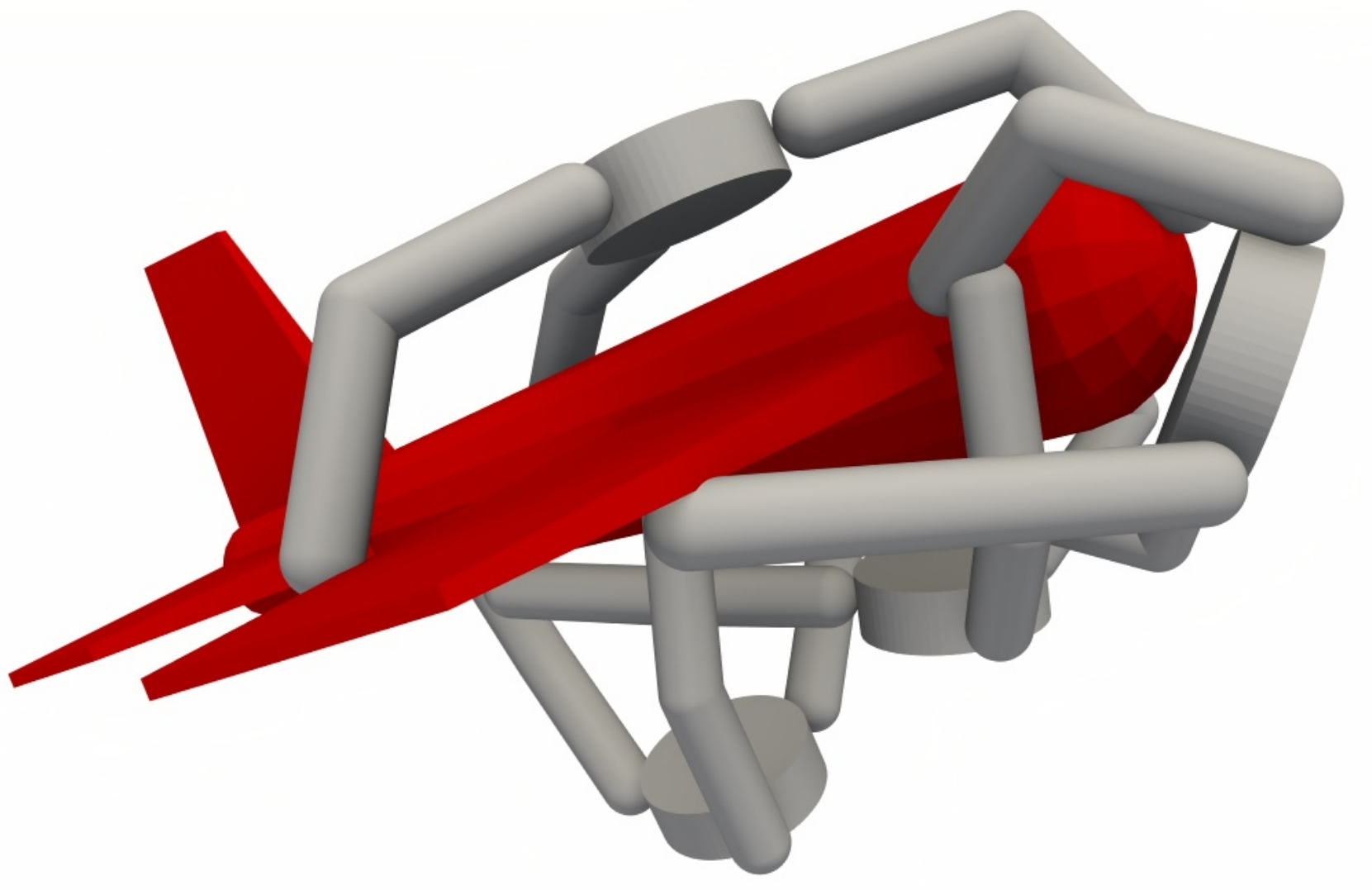}
		\put(-230,5){(a)}
		\put(-10,5){(b)}}
	\vspace{-5px}
	\caption{\label{fig:multi_output} \small{\lmr{(a): $Q_1$ plotted against runtime for \cite{1371616} and our method with the Barrett Hand grasping a bulb (as shown in \prettyref{fig:advantage}). (b): Generating 4 grasps for the plane. }}}
	\vspace{-3px}
\end{figure}
\section{CONCLUSION \& LIMITATIONS}
We present a two-level BB algorithm to search for the optimal grasp pose in a restricted search space that maximizes a given monotonic grasp metric. The high-level BB selects grasp points that maximize grasp quality, while the low-level BB cut infeasible $\BBNode{}$s in terms of gripper's kinematics. Our low-level BB uses a compact MICP formulation that requires a small number of binary variables. Experiments show that our method can plan grasps for complex objects.
\begin{figure}[th]
	\centering
	\vspace{-5px}
	\scalebox{0.8}{
		\includegraphics[width=0.45\textwidth]{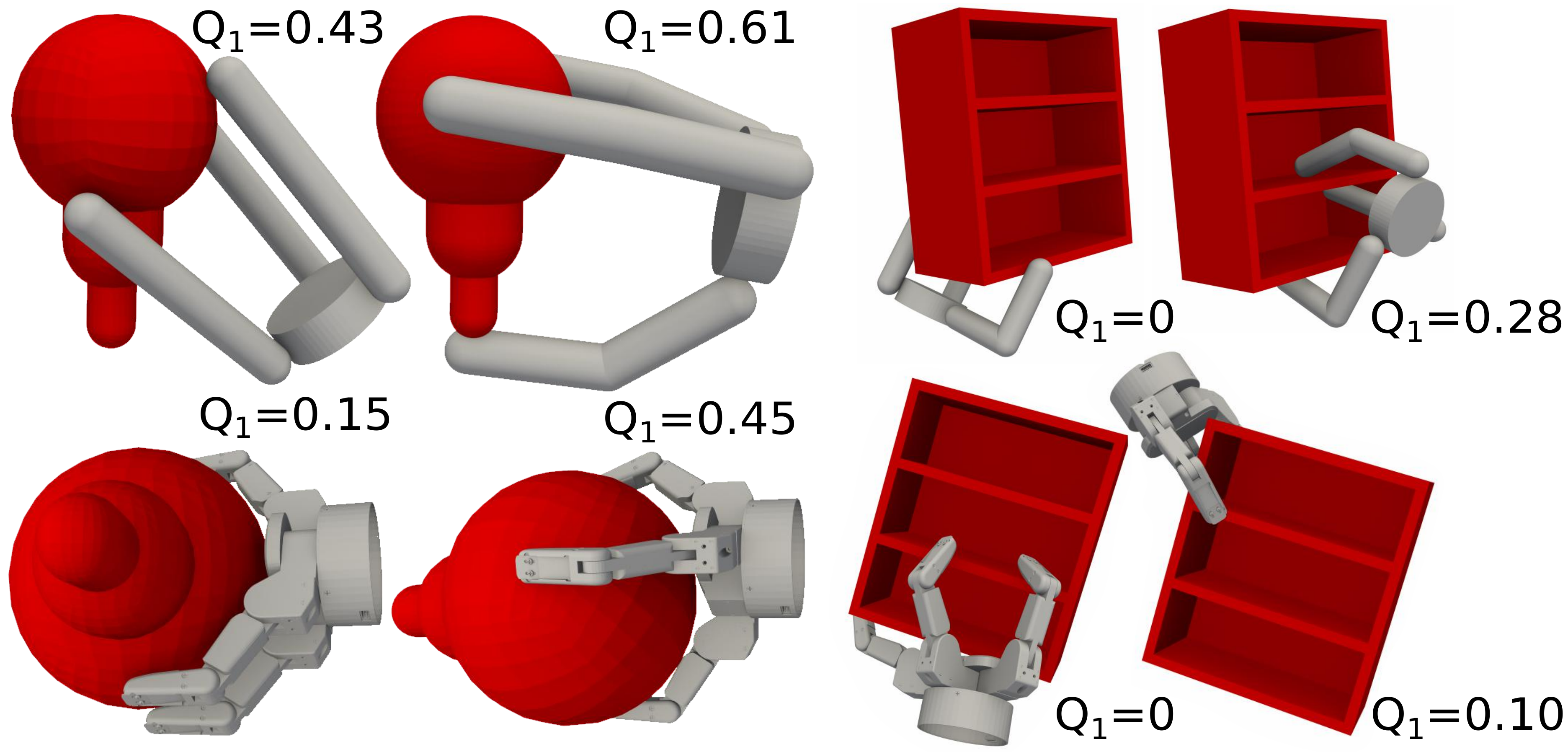}
		\put(-220,-10){(a): \cite{1371616}}
		\put(-160,-10){Ours}
		\put(-100,-10){(b): \cite{1371616}}
		\put(-40 ,-10){Ours}}
	\vspace{-4px}
	\caption{\label{fig:advantage} \small{We show the advantage of our method over the sampling-based method \cite{1371616}. (a): The sampling-based method can mostly find a solution when an object is small, though the grasp quality is less than our solution. (b): When the object is large, the sampling-based method sometimes cannot find a solution, while our method always finds solutions when one exists.}}
	\vspace{-10px}
\end{figure}

Our work has several limitations. First, we only plan gripper poses without considering other sources of infeasibility such as environmental objects and robot arms. When robot arms are considered, the decoupled assumption of \prettyref{sec:IK} is violated and we need new techniques for relaxing IK as MICP. Second, although our IK relaxation is more efficient than \cite{dai2018synthesis}, our method is not an outer-approximation. In other words, if a gripper pose is feasible using exact IK, it might not be feasible under our relaxed IK constraints. \lmr{Third, we only plan for precision grasps with fingertip-contacts, while generating power grasps or caging grasps is a good topic for future work.} In addition, our formulation incurs a high computational cost for complex object shapes, such as those acquired from scanning real-world objects. \lmr{Finally, our method does not consider modeling or sensing uncertainty, which is necessary to realize the grasp in a physical platform.}
%\appendices
\section*{APPENDIX A: BOUNDING SPHERES \& CONES\label{appen:bound}}

Each KD-tree node $X$, contains a set of potential grasp points $p_j$, each of which has an outward normal direction $n(p_j)$. To generate an efficient MICP-based IK problem, we need to compute a minimal bounding sphere that encloses $p_j$ and a minimal bounding cone that encloses $n(p_j)$. In this section, we present methods to compute these bounding volumes via numerical optimization using the following lemmas: 
%sphere
\begin{lemma}
The minimal bounding sphere $B(X)$ can be computed by solving the following conic programming problem: 
\begin{small}
\begin{align*}
\argmin{c(X),r(X)} r(X) \quad \E{s.t.} \|c(X)-p_j\|^2\leq r(X)\;\forall p_j\in X,
\end{align*}
\end{small}
where $c(X)$ is the center of the bounding sphere and $r(X)$ is the radius of the bounding sphere.
\end{lemma}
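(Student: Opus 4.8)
The plan is to prove the lemma directly from the definition of the minimal bounding sphere, by showing that the feasible points of the conic program are in one-to-one correspondence with balls enclosing $X$ and that the objective orders these balls by size. Throughout I treat $r(X)$ as the \emph{squared} radius (consistent with \prettyref{sec:low}), so the geometric radius is $\sqrt{r(X)}$; with this convention the constraint $\|c(X)-p_j\|^2\le r(X)$ is the literal statement ``$p_j$ lies in the ball of squared radius $r(X)$ centered at $c(X)$.''

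First I would verify that the program is a well-posed convex problem. For each $p_j\in X$ the map $(c,r)\mapsto\|c-p_j\|^2-r$ is convex, being the sum of a convex quadratic in $c$ and a linear term in $r$; hence the feasible set $\mathcal{F}=\{(c,r):\|c-p_j\|^2\le r\ \forall p_j\in X\}$ is closed and convex, and the linear objective makes every local minimizer global. Next I would establish the feasibility-to-ball correspondence: a pair $(c,r)$ with $r\ge 0$ lies in $\mathcal{F}$ if and only if the ball $\{x:\|x-c\|^2\le r\}$ of radius $\sqrt r$ centered at $c$ contains every $p_j$, which is exactly what the constraints assert. Because $t\mapsto\sqrt t$ is strictly increasing on $[0,\infty)$, minimizing $r$ is equivalent to minimizing the radius of the enclosing ball, so an optimum $(c^\star,r^\star)$ yields the smallest enclosing ball $B(X)$, as claimed.

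It remains to argue existence and uniqueness of the optimum, which I expect to be the only nontrivial part. For existence, the feasible set is nonempty (e.g.\ $c=p_1$, $r=\max_j\|p_1-p_j\|^2$ is feasible) and the objective is bounded below by $0$; restricting to a sublevel set $\{(c,r)\in\mathcal{F}:r\le r_0\}$ for a feasible $r_0$ confines $c$ to a bounded region, since every $p_j$ must then lie within $\sqrt{r_0}$ of $c$, so this set is compact and the continuous objective attains its minimum. For uniqueness I would use a midpoint (parallelogram-law) estimate: if two optimal centers $c_1\neq c_2$ both achieve $r^\star$, the midpoint $c_m=\tfrac12(c_1+c_2)$ satisfies, for each $p_j$, $\|c_m-p_j\|^2=\tfrac12\|c_1-p_j\|^2+\tfrac12\|c_2-p_j\|^2-\tfrac14\|c_1-c_2\|^2\le r^\star-\tfrac14\|c_1-c_2\|^2<r^\star$, so $c_m$ encloses $X$ with strictly smaller squared radius, contradicting optimality. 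The correspondence and objective-matching are immediate from the constraints; the compactness of the sublevel set and this strict-convexity estimate are the two small facts that require care.
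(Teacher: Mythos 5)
Your proof is correct and takes essentially the same approach as the paper, whose entire argument is the two-sentence observation that the constraints encode ``the ball contains every $p_j$'' and the objective encodes ``smallest radius.'' Your version adds the existence (compact sublevel set) and uniqueness (parallelogram-law) arguments, and correctly resolves the paper's notational slip by reading $r(X)$ as the squared radius; none of this changes the route, it just makes the paper's informal justification rigorous.
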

\begin{proof}
Any valid bounding sphere should contain all the potential grasp points $p_j$, which justifies our constraints. A minimal bounding sphere has the smallest radius, which justifies our objective function.
\end{proof}
%cone
\begin{lemma}
The minimal bounding cone $C(X)$ can be computed by solving the following non-convex programming problem:
\begin{small}
\begin{align*}
\argmin{\|m(X)\|=1,\epsilon(X)} \epsilon(X) \quad \E{s.t.} \|m(X)-n(p_j)\|^2\leq \epsilon(X)\;\forall p_j\in X,
\end{align*}
\end{small}
where $m(X)$ is the central axis of the normal cone and $\epsilon(X)$ is the radius as defined in \prettyref{fig:normal}.
\end{lemma}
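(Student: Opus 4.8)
The plan is to mirror the argument used for the minimal bounding sphere in the preceding lemma, adapting it to the angular (directional) setting and then accounting for the fact that, unlike the sphere case, this program is non-convex. The conceptual heart of the proof is the observation that because every surface normal $n(p_j)$ is a unit vector and the axis is constrained by $\|m(X)\|=1$, the squared-distance constraint is really a bound on the angle between $m(X)$ and $n(p_j)$. Expanding, $\|m(X)-n(p_j)\|^2 = \|m(X)\|^2 - 2\,m(X)^Tn(p_j) + \|n(p_j)\|^2 = 2 - 2\,m(X)^Tn(p_j)$, so the constraint $\|m(X)-n(p_j)\|^2\le\epsilon(X)$ is equivalent to $m(X)^Tn(p_j)\ge 1-\epsilon(X)/2$. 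The feasible directions therefore form a spherical cap centered at $m(X)$ with half-angle $\arccos(1-\epsilon(X)/2)$, which is exactly the normal cone $C(X)$ depicted in \prettyref{fig:normal}.

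With this identification in hand, correctness follows in two directions, just as in the sphere lemma. For soundness, any cone enclosing all the normals must, by the definition of $C(X)$, place each $n(p_j)$ within angular distance of its axis, which is precisely our family of constraints; hence every feasible point of the program describes a valid bounding cone, justifying the constraints. For minimality, since $\arccos(1-\epsilon(X)/2)$ is monotonically increasing in $\epsilon(X)$, minimizing $\epsilon(X)$ minimizes the apex half-angle and so yields the tightest enclosing cone, justifying the objective. Combining the two, the optimizer is the minimal bounding cone.

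The step I expect to require the most care is the treatment of non-convexity. Unlike the bounding-sphere program, whose constraints and objective form a second-order cone program, here the normalization $\|m(X)\|=1$ is a non-convex equality, so I cannot appeal to convex duality or to the automatic global optimality of a conic solver. I would address this by first noting that the program is well-posed, since the feasible set is compact (the unit sphere crossed with a bounded range of $\epsilon(X)$) and the objective is continuous, so a minimizer exists; and second by observing that the cone is used only as a relaxation inside the low-level MICP, so even a locally optimal $m(X)$ produces a valid (if slightly loose) enclosing cone without compromising the soundness of the overall feasibility check. In practice I would solve this program once per KD-tree node during precomputation with the interior-point solver already employed elsewhere, warm-started from the mean normal direction to promote convergence to the global optimum; the inflation of the solved $\epsilon(X)$ into $\epsilon'(X)$ in \prettyref{eq:NConsR} then only enlarges the cone and does not affect the validity of the minimal-cone computation itself.
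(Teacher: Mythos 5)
Your proposal is correct and its core matches the paper's own (very brief) proof exactly: the constraints are justified because any valid bounding cone must contain every normal $n(p_j)$, and the objective is justified because the minimal cone has the smallest radius $\epsilon(X)$; your expansion of $\|m(X)-n(p_j)\|^2 = 2-2\,m(X)^Tn(p_j)$ into an angular bound is a correct elaboration the paper leaves implicit via Figure~\ref{fig:normal}. The one place you diverge from the paper is in handling the non-convexity of $\|m(X)\|=1$: the paper does not settle for a locally optimal warm-started interior-point solve, but instead relaxes the unit-length constraint into an MICP using the technique of \cite{dai2018synthesis}, so the precomputed cone is obtained from a mixed-integer convex program rather than a local method; your observation that a merely locally optimal (hence possibly looser) cone would still be sound for the downstream feasibility check is nonetheless valid.
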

\begin{proof}
Any valid bounding cone should contain all the potential grasp normals $n(p_j)$, which justifies our constraints. A minimal bounding cone has the smallest radius, which justifies our objective function.
\end{proof}
This optimization is non-convex due to the unit length constraint $\|m(X)\|=1$.  To compute the minimal bounding cone, we relax the unit length constraint using MICP via the technique presented in \cite{dai2018synthesis}. Finally, we take the user-defined threshold into consideration and compute $\epsilon'(X)$ as follows:
\begin{small}
\begin{align*}
&\theta\triangleq
2\E{sin}^{-1}(\frac{\sqrt{\epsilon(X^i)}}{2})+
2\E{sin}^{-1}(\frac{\sqrt{\epsilon}}{2})  \\
&\epsilon'(X^i)=\left[2\E{sin}(\frac{\E{min}(\theta,\pi)}{2})\right]^2.
\end{align*}
\end{small}

\section*{APPENDIX B: ALGORITHMS\label{appen:algor}}
A summary of algorithms. Given a gripper and a target object, we first perform a precomputation using \prettyref{Alg:pre}. Afterward, we use \prettyref{Alg:high} as high-level BB and use \prettyref{Alg:low} as low-level BB. The accelerated bottom-up kinematics check is summarized in \prettyref{Alg:BUCheck}, which is used as a middleware between the two levels.
\vspace{-8px}
\begin{algorithm}[ht]
\caption{\label{Alg:pre} Precomputation}
\begin{small}
\begin{algorithmic}[1]
\State Sample $p_1,\cdots,p_P$ on $\partial\Omega_o$ using \cite{wei2008parallel}
\State Construct KD-tree for $p_1,\cdots,p_P$ using \cite{popov2006experiences}
\For{Each $X$ in KD-tree}
\State Construct $B(X)$ and $C(X)$ (Appendix A)
\EndFor
\For{Each link on the gripper}
\State Construct relaxed IK constraints (\prettyref{eq:XChoose})
\EndFor
\State Construct relaxed IK constraint for $\Omega_o$ (\prettyref{eq:RChoose})
\end{algorithmic}
\end{small}
\end{algorithm}
%\vspace{-10px}

\begin{algorithm}[ht]
\caption{\label{Alg:BUCheck} BottomUpKinematicsCheck($\BBNode$)}% as described in \prettyref{sec:acceleration}
\begin{small}
\begin{algorithmic}[1]
\LineComment{Check if ancestor \BBNode{}.LowLevel=False}
%\LineComment{If so, cut early due to \prettyref{Lem:Feas}}
\State Current\BBNode{}$\gets$\BBNode{}
\Do
\If{Current\BBNode{}.LowLevel=False}
\State \BBNode{}.LowLevel=False
\State Return
\EndIf
\State Current\BBNode{}$\gets$Current\BBNode{}.Parent
\doWhile{Current\BBNode{} is not the root}
\State Current\BBNode{}$\gets$\BBNode{}\Comment{Bottom-up Kinematics Check}
\State ChildFeasible$\gets$False
\Do
\If{Current\BBNode{}.LowLevel$\neq$Unknown}
\State Break\Comment{The \BBNode{}/ancestors have been checked}
\ElsIf{ChildFeasible=True}
%\LineComment{Must be feasible due to \prettyref{Lem:Feas}}
\State Current\BBNode{}.LowLevel=True
\Else
%\LineComment{Use $\MIP$ to check kinematics feasibility}
\State Current\BBNode{}.LowLevel=
\State \quad LowLevelBB(Current\BBNode{})
\EndIf
\State ChildFeasible$\gets$Current\BBNode{}.LowLevel
\State Current\BBNode{}$\gets$Current\BBNode{}.Parent
\doWhile{Current\BBNode{} is not root}
\end{algorithmic}
\end{small}
\end{algorithm}
\vspace{-10px}

\begin{algorithm}[ht]
\caption{\label{Alg:high} HighLevelBB}
\begin{small}
\begin{algorithmic}[1]
\LineComment{\BBNode{}.LowLevel marks gripper's feasibility}
\State $Q_{best}\gets0$ \Comment{\BBNode{}.LowLevel initializes to Unknown}
\State $\E{Queue}\gets\emptyset$, $\E{Queue}$.insert($\BBNode(X_R,\cdots,X_R)$)
\While{$\E{Queue}\neq\emptyset$}
\State $\BBNode(X^1,\cdots,X^K)\gets\E{Queue}$.pop()
\If{$\BBNode$ is leaf}
\State BottomUpKinematicsCheck(\BBNode{})
\If{\BBNode{}.LowLevel=True}
\State $Q_{curr}\gets Q(X^1\cup X^2\cup\cdots X^K)$\Comment{Bound}
\If{$Q_{curr}>Q_{best}$}
\State $Q_{best}\gets Q_{curr}$
\EndIf
\EndIf
\ElsIf{\BBNode{}.LowLevel=True$\vee$Unknown}
\State Find $|X^i|\geq1$\Comment{Branch}
\State $\E{Queue}$.insert($\BBNode(\cdots,X^i_l,\cdots)$)
\State $\E{Queue}$.insert($\BBNode(\cdots,X^i_r,\cdots)$)
\EndIf
\EndWhile
\State Return $Q_{best}$
\end{algorithmic}
\end{small}
\end{algorithm}

\setlength{\textfloatsep}{2pt}
\begin{algorithm}[ht]
\caption{\label{Alg:low} LowLevelBB($\BBNode(X^1,\cdots,X^K)$)}
\begin{small}
\begin{algorithmic}[1]
\State $\MIP\gets\emptyset$
\State $\MIP$.add(\prettyref{eq:XChoose},\ref{eq:RChoose})
\For{$i=1,\cdots,K$}
\For{Each $X$ from $X^i$ to $X_R$}
\If{$|X^i|=1$}
\State $\MIP$.add(\prettyref{eq:EECons},\ref{eq:NCons}) for $i$
\Else
\State $\MIP$.add(\prettyref{eq:BCons},\ref{eq:NConsR}) for $X$
\EndIf
\EndFor
\EndFor
\Do
\State Solve $\MIP$ using \cite{gurobi}\Comment{Warm-start if possible}
\If{$\MIP$.Feasible=False}
\State Return False
\EndIf
\State Detect penetration $D$ among $\Omega_o$ and $\Omega_i$
\If{$D>0$}
\State $\MIP$.add(\prettyref{eq:collA} or \prettyref{eq:collB})
\EndIf
\doWhile{$D>0$}
\State Return True
\end{algorithmic}
\end{small}
\end{algorithm}

\section*{ACKNOWLEDGMENT}
This work was supported in part by the National Key Research and Development Program of China (No.2018AAA0102200), NSFC (61572507, 61532003, 61622212), ARO grant W911NF-18-1-0313, and Intel. Min Liu is supported by the China Scholarship Council.

%%%%%%%%%%%%%%%%%%%%%%%%%%%%%%%%%%%%%%%%%%%%%%%%%%%%%%%%%%%%%%%%%%%%%%%%%%%%%%%%

\bibliographystyle{IEEEtran}
\bibliography{reference}

\end{document}